\documentclass{article}

\PassOptionsToPackage{numbers, compress}{natbib}
\usepackage[preprint]{neurips_2020}

\usepackage[T1]{fontenc}    
\usepackage[hidelinks]{hyperref}       
\usepackage{url}            
\usepackage{booktabs}       
\usepackage{amsfonts}       
\usepackage{nicefrac}       
\usepackage{microtype}      

\usepackage[utf8]{inputenc}
\usepackage{amsmath}
\usepackage{amsthm}
\usepackage{amsfonts}
\usepackage{algorithm}
\usepackage{algorithmic}
\usepackage[dvipdfmx]{graphicx}
\usepackage{subfigure}

\newcommand{\E}{\mathop{{}\mathbb{E}}}

\newtheorem{theorem}{Theorem}[section]

\newtheorem{proposition}[theorem]{Proposition}

\usepackage{color}
\usepackage{comment}

\newcommand{\argmin}{\mathop{\rm arg~min}\limits}
\renewcommand{\cite}{\citep}

\title{Regret Minimization for Causal Inference \\ on Large Treatment Space}

\author{%
  Akira Tanimoto\\
  NEC Corporation, Kyoto University, RIKEN AIP\\
  \texttt{a.tanimoto@nec.com}
  \And
  Tomoya Sakai\\
  NEC Corporation, RIKEN AIP\\
  \texttt{tomoya\_sakai@nec.com}\\
  \And Takashi Takenouchi\\
  Future University Hakodate, RIKEN AIP\\
  \texttt{ttakashi@fun.ac.jp}\\
  \And Hisashi Kashima\\
  Kyoto University, RIKEN AIP\\
  \texttt{kashima@i.kyoto-u.ac.jp}
}

\begin{document}

\maketitle

\begin{abstract}
    Predicting which action (treatment) will lead to a better outcome is a central task in decision support systems.
    To build a prediction model in real situations, learning from biased observational data is a critical issue due to the lack of randomized controlled trial (RCT) data.
    To handle such biased observational data, recent efforts in causal inference and counterfactual machine learning have focused on debiased estimation of the potential outcomes on a binary action space and the difference between them, namely, the individual treatment effect. 
    When it comes to a large action space (e.g., selecting an appropriate combination of medicines for a patient), however, the regression accuracy of the potential outcomes is no longer sufficient in practical terms to achieve a good decision-making performance.
    This is because the mean accuracy on the large action space does not guarantee the nonexistence of a single potential outcome misestimation that might mislead the whole decision. 
    Our proposed loss minimizes a classification error of whether or not the action is relatively good for the individual target among all feasible actions, which further improves the decision-making performance, as we prove.
    We also propose a network architecture and a regularizer that extracts a debiased representation not only from the individual feature but also from the biased action for better generalization in large action spaces.
    Extensive experiments on synthetic and semi-synthetic datasets demonstrate the superiority of our method for large combinatorial action spaces.

\end{abstract}

\section{Introduction}
Predicting individualized causal effects is an important issue in many domains for decision-making.
For example, a doctor considers which medication would be the most effective for a patient, a teacher considers which problems are most effective for improving the achievement of a student, and a retail store manager considers which assortment would improve the overall store sales.
To support such decision-making, we consider providing a prediction of which actions will lead to better outcomes.

Recent efforts in causal inference and counterfactual machine learning have focused on making predictions of the potential outcomes that correspond to each action for each individual target based on observational data.
Observational data consists of features of targets, past actions actually taken, and their outcomes.
We have no direct access to the past decision-makers' policies, i.e., the mechanism of how to choose an action under a target feature given.
Unlike in normal prediction problems, 
pursuing high-accuracy predictions only with respect to the historical data carries the risk of incorrect estimates due to the biases in the past policies.
These biases are also known as {\it spurious correlation} \cite{simon1954spurious, pearl2009causality}, which might mislead the decision-making. 
For those cases where real-world experiments such as randomized controlled trials (RCTs) or multi-armed bandit is infeasible or too expensive, causal inference methods provide debiased estimation of potential outcomes from observational data.

While most of the existing approaches assume limited action spaces such as binary ones as in individual treatment effect estimation (ITE), 
there are many real-world situations where the number of options is large.
For example, doctors need to consider which combination of medicines will best suit a patient.
For such cases, it is difficult to apply existing methods (as in \cite{shalit2017estimating,yoon2018ganite,schwab2018perfect}) for two reasons.
First, since the sample sizes for each action would be limited,
building models for each action (or a multi-head neural network), which existing methods adopt, is not sample-efficient.
Second, even if we manage to achieve the same level of regression accuracy as when the action space is limited, the same decision-making performance is no longer guaranteed in a large action space, as we prove in Section~\ref{sec:problem_relation_MAE_CG}.
This is because, in short, even though overestimation of the potential outcome for only a single action in many alternatives has only a small impact on the overall regression accuracy, it can mislead the whole decision to a bad action and result in a poor decision performance.

To achieve informative causal inference for decision-making in a large action space, we propose solutions for the above two problems.
For the sample-efficiency, we directly formulate the observational bias problem as a domain adaptation from a biased policy to a uniform random policy, which enables the extraction of debiased representations from both the individual features and the actions.
Thereby, we can build a debiased single-head model, aiming at better generalization for the large action space.
For the second issue, we analyze our defined decision-focused performance metric, ``regret'', and find that we can further improve the decision performance by minimizing the classification error of being in the top-$k$ best actions among feasible actions for each target, in addition to the regression error (MSE).
We cannot directly observe whether the action is in the top-$k$ since only one action and its outcome is observed for each target, and we propose a proxy loss that compares the observed outcome to the estimated conditional average performance of the past decision-makers.

In summary, our proposed method minimizes both the classification error and the MSE using debiased representations of both the features and the actions.
We demonstrate the effectiveness of our method through extensive experiments with synthetic and semi-synthetic datasets.

\begin{figure}[tb]
  \begin{center}
    \subfigure[Sample for ITE]{
        \includegraphics[keepaspectratio, scale=0.3]{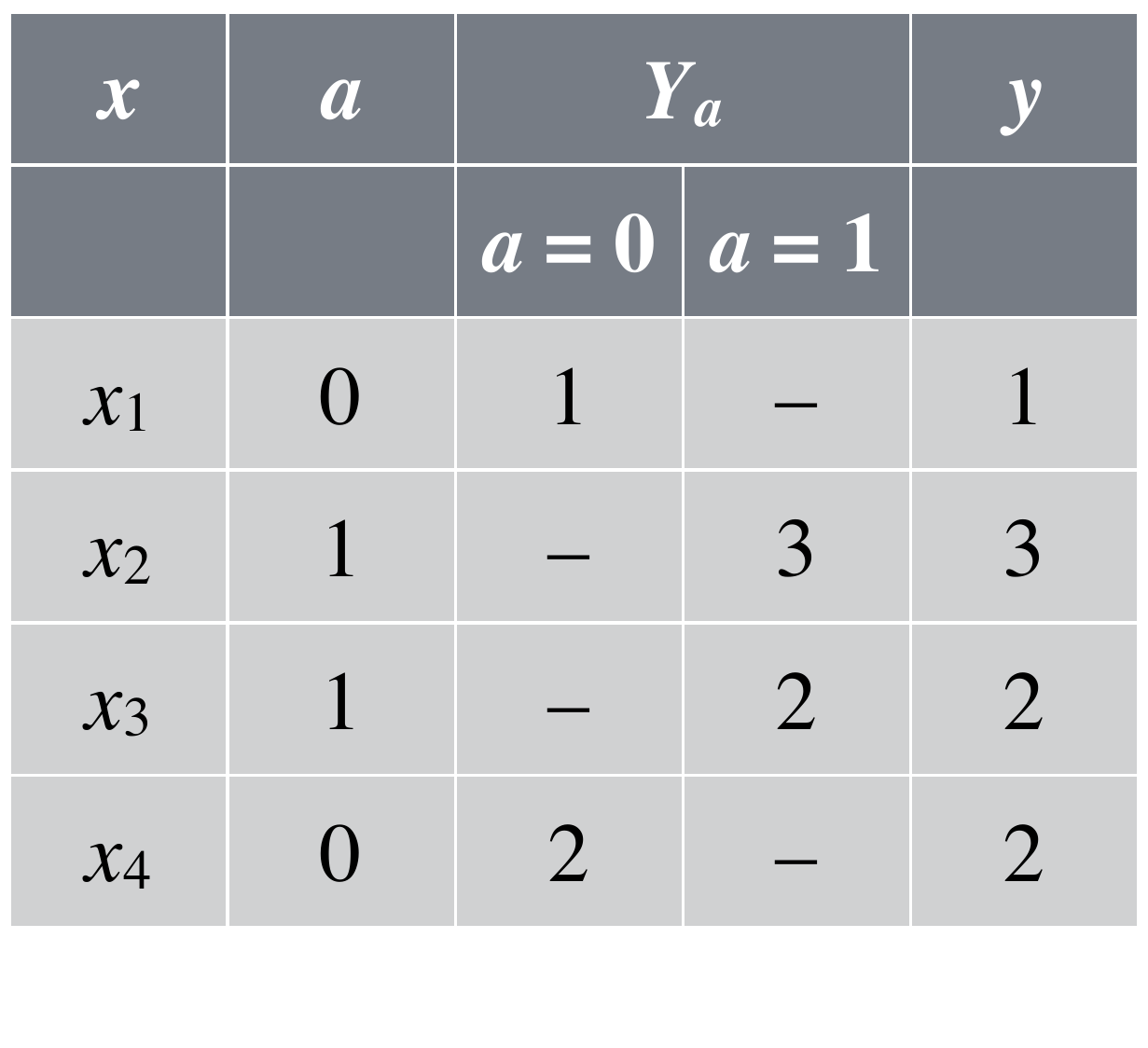}
        \label{fig:sample_table_ite}
    }
    \quad
    \subfigure[Sample for causal inference on a combinatorial action space]{
        \includegraphics[keepaspectratio, scale=0.3]{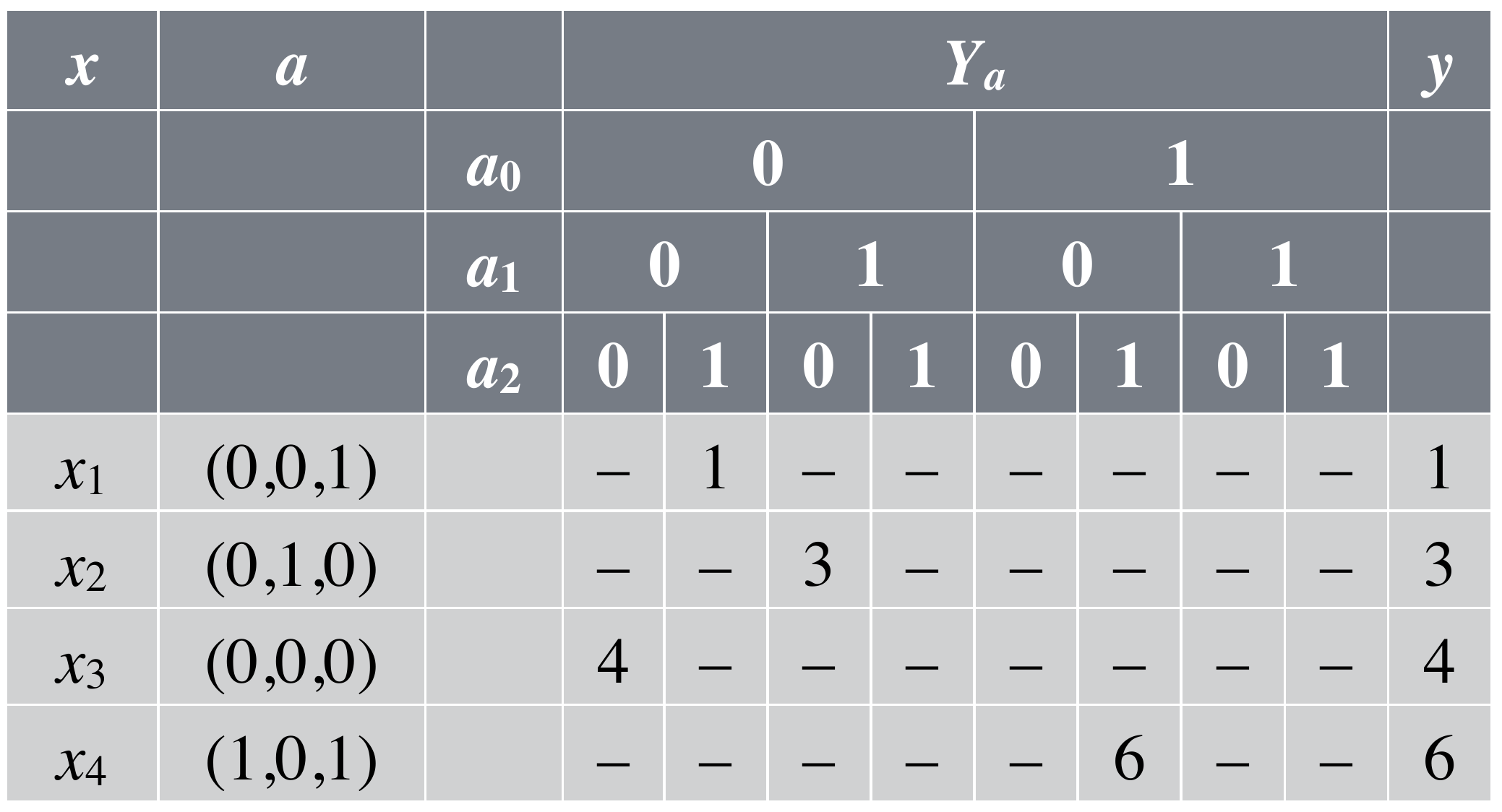}
        \label{fig:sample_table_combi_single}
    }
    \caption{Example data tables for the ITE estimation and our setting on a combinatorial action space. Dashes indicate missing entries. Both tasks can be seen as learning from missing-at-random data. Only factual outcomes are observed (when $a=a^\prime$, $y_{a^\prime}$ is observed) and the counterfactual records $\{y_a\}_{a\neq a^\prime}$ are missing.}
    \label{fig:sample_table}
  \end{center}
\end{figure}

\section{Problem setting}
\label{sec:problem}
In this section, we formulate our problem and define a decision-focused performance metric.
Our aim is to build a predictive model to inform decision-making.
Given a feature vector $x \in \mathcal X \subset \mathbb R^d,$ the learned predictive model is expected to correctly predict which action $a \in \mathcal{A}(x)$ leads to a better outcome $y \in \mathcal Y \subset \mathbb R$, where $\mathcal A(x)$ is a feasible subset of finite action space $\mathcal A$ given $x$.
We hereafter assume feasible action space does not depend on the feature, i.e., $\mathcal A(x)=\mathcal A$, for simplicity.
As a typical case of large action spaces, we assume an action consisting of multiple causes, i.e., $\mathcal A = \{0,1\}^m$ (combinatorial action space).

We assume there exists a joint distribution
$p(x,a,y_1,\ldots,y_{|\mathcal A|}) = p(x)\mu(a|x)p(y_1,\ldots,y_{|\mathcal A|}|x),$ where $\mu(a|x)$ is the unknown decision-making policy of past decision-makers, called propensity, and $y_1,\ldots,y_{|\mathcal A|}$ are the potential outcomes corresponding to each action.
The observed (factual) outcome $y$ is the one corresponding to the observed action $a$, i.e., a training instance is $(x_n,a_n,y_{a_n})$, where $n$ denotes the instance index, and the other (counterfactual) potential outcomes are regarded as missing as shown in Fig.~\ref{fig:sample_table}. 
Note that the joint distribution is assumed to have conditional independence $(y_1, \ldots, y_{|\mathcal A|})\perp a | x$ (unconfoundedness). 
In addition, we assume $\forall a \in \mathcal{A}$ and $\forall x$, $0<\mu(a | x)<1$ (overlap). 
These are commonly required to identify causal effects \cite{imbens2009recent, pearl2009causality}.

To define a performance measure of a model, we utilize a simple prediction-based decision-making policy: 
given a parameter $k \in [|\mathcal A|]$, 
$$
  \pi_k^f(a|x) := \left\{
    \begin{aligned}
    1 / k &~~ \text { if } \mathrm{rank}(f(x,a); \{f(x,a^\prime)\}_{a^\prime}) \le k \\
    0 &~~ \text { otherwise, }
    \end{aligned}
    \right.
$$
where $\mathrm{rank}(\cdot)$ denotes the rank among all the feasible actions $\mathcal{A}$, i.e., $\mathrm{rank}(f(x,a);\{f(x,a^\prime)\}_{a^\prime}):=\left|\{a^\prime \mid f(x,a^\prime) \ge f(x,a), a^\prime \in \mathcal{A}\}\right|$.
We also denote as $\mathrm{rank}(f(x,a))$ for short.
This means choosing an action uniformly at random from the predicted top-$k$ actions.

Here we define the performance of $\pi_k^f$ by its expected outcome $\E_{p(x)\pi_k^f(a|x)}[y_a]$,
which can be written as the following mean cumulative gain (mCG), and we also define its difference from the oracle's performance (regret):
\begin{align}
  \mathrm{mCG}@k(f) &:= \frac{1}{k}\E_x \left[\sum_{a : \mathrm{rank}(f(x,a)) \le k} \bar{y}_{a} \right], \label{eq:def_cg} \\
  \mathrm{Regret}@k(f) &:= \frac{1}{k}\E_x \left[\sum_{a : \mathrm{rank}(\bar y_a) \le k} \bar{y}_{a}\right] - \mathrm{mCG}@k(f) , \label{eq:def_regret}
\end{align}
where $\bar{y}_a := \E[y_a|x]$ is the expected potential outcome and $\mathrm{rank}(\bar y_a)$ is its rank among all the feasible actions.
Here $\left(1-\mathrm{mCG}@1(f)\right)$ is known as the policy risk \cite{shalit2017estimating}.
Since the first term in (\ref{eq:def_regret}) is constant with respect to $f$, the mCG and the regret are two sides of the same coin as the performance metrics of a model.
We regard the mCG (or the regret) as the metric in this paper.

\section{Relation between prediction accuracy and precision in decision-making}
\label{sec:problem_relation_MAE_CG}
In this section, we analyze our decision-focused performance metric $\mathrm{Regret}@k$.
Our analysis reveals the difficulty of causal inference in a large action space that the regret bound get worse for the same regression accuracy.
At the same time, however, it is shown that we can improve the bound by simultaneously minimizing a classification error, which leads to our proposed method.

A typical performance measure in existing causal inference studies that is applicable to large action spaces is the following uniform MSE \cite{schwab2018perfect,yoon2018ganite}:
\begin{align}
  \label{eq:unbiased mse}
  \mathrm{MSE}^u(f) = \E_x \left[ \frac{1}{\left| \mathcal{A}\right|} \sum_{a \in \mathcal{A}} \E_{y_a|x} [(y_a - f(x,a))^2] \right].
\end{align}
Note that $\mathrm{MSE}^u$ is different from the normal MSE in the supervised machine learning context in which the expectation is taken over the same distribution as the training, i.e., $\mathrm{MSE} = \E_{y, a, x \sim p(y_a|x)\mu(a|x)p(x)} \left[ (y-f(x,a))^2 \right]$.
We refer to $\mathrm{MSE}^u$ as MSE, or specifically the uniform MSE, in this paper.

Here the relation between the uniform MSE and the regret is the following (proof is in Appendix~\ref{sec:proof_regret_bound}).
\begin{proposition}
  \label{thm:relation_cg_mse}
  The regret in \eqref{eq:def_regret} will be bounded 
  with uniform MSE in \eqref{eq:unbiased mse} as
  \begin{align}
  \label{eq:regret-upper-bound}
    \mathrm{Regret}@k(f) \le \frac{|\mathcal A|}{k} \sqrt{\mathrm{ER}_k^u(f) \cdot \mathrm{MSE}^u(f)},
  \end{align}
  where $\mathrm{ER}_k^u(f)$ is 
  the top-$k$ classification error rate, i.e.,
  \begin{align*}
    \mathrm{ER}_k^u(f) := \E_x \left[ \frac{1}{|\mathcal{A}|} \sum_{a \in \mathcal{A}} I\left((\mathrm{rank}(y_{a})\le k) \oplus (\mathrm{rank}(f(x,a))\le k)\right) \right],
  \end{align*}
  where $\oplus$ denotes the logical XOR.
\end{proposition}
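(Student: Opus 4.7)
My plan is to start by rewriting the regret as a sum over the symmetric difference of the true and the predicted top-$k$ sets. Define $T^{\star}(x) := \{a : \mathrm{rank}(\bar y_a) \le k\}$ and $\hat T(x) := \{a : \mathrm{rank}(f(x,a)) \le k\}$. The intersection terms cancel and, since $|T^\star|=|\hat T|=k$, the index sets $T^\star\setminus\hat T$ and $\hat T\setminus T^\star$ have equal size, giving
\begin{align*}
k\cdot\mathrm{Regret}@k(f) = \E_x\!\left[\sum_{a \in T^\star\setminus\hat T}\bar y_a \;-\; \sum_{a \in \hat T\setminus T^\star}\bar y_a\right].
\end{align*}

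Next I would add and subtract $f(x,a)$ inside each sum to separate the contribution of the model's ranking from its pointwise errors. The pure-$f$ piece $\sum_{T^\star\setminus\hat T}f(x,a) - \sum_{\hat T\setminus T^\star}f(x,a)$ is non-positive: by definition of $\hat T$, every action in $\hat T\setminus T^\star$ has $f$-value at least the $k$-th largest, and every action in $T^\star\setminus \hat T$ has $f$-value at most the $k$-th largest, and the two sets have equal cardinality so these contributions cancel favorably. What remains is a signed sum $\sum_{a \in T^\star\triangle\hat T} s_a\,(\bar y_a - f(x,a))$ with $s_a\in\{\pm 1\}$, which gives a pointwise upper bound.

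To finish, I would apply Cauchy--Schwarz twice. For each $x$, $\sum_{a\in T^\star\triangle \hat T}s_a(\bar y_a - f(x,a)) \le \sqrt{|T^\star\triangle\hat T|}\,\sqrt{\sum_{a\in\mathcal A}(\bar y_a - f(x,a))^2}$. Taking expectations and applying Cauchy--Schwarz again over $x$ factorizes the bound into $\sqrt{\E_x|T^\star\triangle\hat T|}\cdot\sqrt{\E_x\sum_a(\bar y_a - f(x,a))^2}$. The first factor equals $\sqrt{|\mathcal A|\cdot \mathrm{ER}_k^u(f)}$ directly from the definition of $\mathrm{ER}_k^u$ (each misranked action contributes $1$ to $|T^\star\triangle\hat T|$). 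The second factor is bounded using the bias--variance decomposition $\E_{y_a|x}[(y_a - f(x,a))^2] = \mathrm{Var}(y_a|x) + (\bar y_a - f(x,a))^2 \ge (\bar y_a - f(x,a))^2$, which yields $\sqrt{|\mathcal A|\cdot \mathrm{MSE}^u(f)}$. Dividing by $k$ produces the claimed inequality.

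The main obstacle will be the non-positivity step for the pure-$f$ piece: one must recognize that the definition of $\hat T$ as the $f$-argmax is exactly what collapses a two-sided bound into a one-sided one, which is why the result does not need the stronger regression guarantee. The rest is essentially Cauchy--Schwarz bookkeeping. A minor subtlety is tie-breaking in $\mathrm{rank}(\cdot)$; any consistent rule preserves both $|T^\star|=|\hat T|=k$ and the threshold comparisons used above, so the argument goes through unchanged.
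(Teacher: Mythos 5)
Your proof follows essentially the same route as the paper's: your symmetric-difference decomposition with the inserted $f$-terms is exactly the paper's sign vector $s=\sum_{i\le k}(\mathbf{1}_{a_i^\ast}-\mathbf{1}_{\hat a_i^\ast})$ paired with the error vector via Cauchy--Schwarz, followed by a second application over $x$. If anything you are slightly more careful than the paper, which writes $e_a=y_a-f_a$ without spelling out the bias--variance step relating $(\bar y_a-f(x,a))^2$ to $\E_{y_a|x}[(y_a-f(x,a))^2]$ and which cites Jensen where your Cauchy--Schwarz over $x$ is the cleaner justification for factorizing $\E_x\bigl[\sqrt{\mathrm{ER}\cdot\mathrm{MSE}}\bigr]\le\sqrt{\E_x[\mathrm{ER}]\,\E_x[\mathrm{MSE}]}$.
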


Since $\mathrm{ER}_k^u(f) \le 1$ for any $f$, we see that only minimizing the uniform MSE as in existing causal inference methods leads to minimizing the regret.
However, when $|\mathcal{A}|/k$ is large, the bound would be loose, and only unrealistically small $\mathrm{MSE}^u$ provide a meaningful guarantee for the regret.
At the same time, we see that the bound can be further improved by minimizing the uniform top-$k$ classification error rate $\mathrm{ER}_k^u(f)$ simultaneously, which leads to our proposed method.

\section{Regret minimization network: debiased potential outcome regression and classification on a large action space}
\label{sec:method}
Our proposed method, regret minimization network (RMNet), consists of two parts.
First, we introduce our loss that aim to minimize the regret by minimizing both 
$\mathrm{MSE}^u$ and $\mathrm{ER}^u_k$.
Then, we introduce a sample-efficient network architecture, in which a representation is extracted from both the feature $x$ and the action $a$, and a representation-based debiasing regularizer that performs domain adaptation according to the structure.

\subsection{Uniform regret minimization loss}
As we saw in Section~\ref{sec:problem_relation_MAE_CG}, 
we can improve the decision-making performance by minimizing the uniform top-$k$ classification error rate
$\mathrm{ER}_k^u$.
Notice that the r.h.s. of Eq.~\eqref{eq:regret-upper-bound} is 
bounded as follows:
$\sqrt{\mathrm{ER}_k^u(f) \cdot \mathrm{MSE}^u(f)} \le \left(\gamma  \mathrm{ER}_k^u(f) + \mathrm{MSE}^u(f) / \gamma \right)/2$
from the inequality of arithmetic and geometric means, for $\gamma>0$, 
and the equality holds when $\gamma = \sqrt{ \mathrm{MSE}^u(f)\big/ \mathrm{ER}_k^u(f)}$.
We thus aim to minimize the weighted sum of $\mathrm{ER}_k^u$ and $\mathrm{MSE}^u_k$.

Since we observe only one action and its outcome for each target,
we cannot directly estimate $\mathrm{ER}_k^u(f)$, which is based on the ranked list of potential outcomes, only from the data.
Therefore, we recast the minimization of $\mathrm{ER}_k^u(f)$ into a simple classification.

First, we rewrite $\mathrm{ER}_k^u(f)$ with the 0-1 classification risk as follows
(the derivation is in Appendix~\ref{sec:der-ranking-to-clf}):
\begin{align}
  \mathrm{ER}_k^u(f)
    = \E_{x}\left[ \frac{1}{|\mathcal{A}|}\sum_{a\in\mathcal{A}}
  \ell_{0-1}(y_a - y_{a_k^\ast}, f^\prime(x,a) - y_{a_k^\ast}) \right],
  \label{eq:ranking-to-clf}
\end{align}
where $f^\prime(x,a) = f(x,a)-f(x,\hat a_k^\ast) + y_{a_k^\ast}$ and $\ell_{0-1}(t,\hat t):=I(t\ge 0 \oplus \hat t \ge 0)$ is the 0-1 classification loss. 
Here the terms $-f(x,\hat a_k^\ast) + y_{a_k^\ast}$ are constant with respect to $a$,
and $\mathrm{ER}_k^u(f)=\mathrm{ER}_k^u(f^\prime)$ thus holds.
Therefore, we optimize the 0-1 loss with respect to $f^\prime$.

Next, we replace the unobservable $k$-th best outcome $y_{a_k^\ast}$ in \eqref{eq:ranking-to-clf} with the conditional average outcome $\E_{a\sim \mu(a|x)}[y_a|x]$, which can be estimated by a model trained using observational data as $g=\argmin_{g^\prime} \frac{1}{N} \sum_n (y_n-g^\prime(x_n))^2$.
This means that we do not optimize $\mathrm{ER}_k^u(f)$ for arbitrary $k$ but for a specific $k$ that corresponds to the average performance of the observational policy, i.e., such $k$ that satisfies $y_{a_{k+1}^\ast} \le \E_{a \sim \mu(a|x)}[y_a|x] \le y_{a_k^\ast}$ ($k$ may depend on $x$).
The replaced numerical label $y-g(x)$ is called residual\footnote{Also known as the advantage in the reinforcement learning \cite{mnih2016asynchronous}.}.
A positive residual $y-g(x)>0$ means that the action $a$ outperformed the conditional average performance of the observational policy, thus ranking such $a$ higher under $x$ leads to superior performance to the past decision-makers.

Considering the noise on the residual $y-g(x)$ due to the noise on $y_a$ and the estimation error of $g(x)$, we train our model with an estimation of the true label called \emph{soft-label} \citep{peng2014learning,nguyen2011learning} $\sigma(y-g(x)),$ 
where $\sigma(t):=1/(1+\exp(-t))$ is the sigmoid function, instead of a naive plug-in label $I(y-g(x))$.
The proposed proxy risk for $\mathrm{ER}_k^u$ is the following cross-entropy:
\begin{align}
  \label{eq:soft cross entropy}
  L_\mathrm{cl}^u(f;g) = - \E_{x}\left[ \frac{1}{|\mathcal{A}|}\sum_{a\in\mathcal{A}} \{  s \log v + (1-s)\log(1-v) \} \right],  
\end{align}
where $s:=\sigma(y - g(x))$ and $v:=\sigma(f(x,a) - g(x))$.
Note that the loss for each $n$ is minimized when $y_n = f(x_n, a_n)$ regardless of $g(x_n)$, as illustrated in Fig.~\ref{fig:loss_shape} in the appendix.

After all, our risk is defiend as 
the weighted sum of the classification risk and the MSE:
\begin{align}
  \label{eq:uniform_expected_risk}
  L^u(f;g,\beta) = \beta L_\mathrm{cl}^u(f;g) + (1-\beta) \mathrm{MSE}^u(f),
\end{align}
where $0\leq\beta\leq1$.

\subsection{Debiasing by representation-based domain adaptation to the RCT policy}
\label{subsec:domain adaptation}
While accessible observational data is biased by the propensity $\mu(a|x)$,
the expected risk $L^u(f;g)$ is averaged over all actions uniformly.
In this section, therefore, we construct a debiased empirical risk against the sampling bias.
Also, we propose an architecture that extracts representations from both the feature and the action for better generalization in a large action space.

There are two major approaches for debiased learning in individual-level causal inference.
One is density estimation-based method called inverse probability weighting using propensity score (IPW) \cite{austin2011introduction}, in which each instance is weighted by $1/\mu(a_n|x_n)$.
Since the expected risk matches the one of RCT, a good performance can be expected asymptotically under accurate estimation of $\mu$ or when it is recorded as in logged bandit problems.
However, in observational studies, where the propensity has to be estimated and plugged-in, its efficacy would easily drop \cite{kang2007demystifying}.
The other approach is representation balancing \cite{shalit2017estimating,johansson2016learning}, in which a model consists of representation extractor $\phi$ and hypotheses $\{h_a\}_a$ as in Fig.~\ref{fig:structure_cfr} and the conditional probabilities of representations $\{p(\phi|a)\}_a$ are encouraged to be similar to each other by means of so-called integral probability metric (IPM) regularizer.
We also take this approach and extend for large action spaces.

It is difficult to naively extend these methods to a large action space.
A reason is, as in Fig.~\ref{fig:structure_cfr}, constructing hypothesis layers for each action is not sample-efficient.
Also, representation balancing of each pair of actions $D_\mathrm{IPM}(p_a(\phi), p_{a^\prime}(\phi))$ is computationally and statistically infeasible.
Therefore, we propose extracting representations from both the features and the action as in Fig.~\ref{fig:general_structure}.

\begin{figure}[tb]
  \centering
  \subfigure[Counterfactual Regression (CFR)]{
 \includegraphics[keepaspectratio,width=60mm]{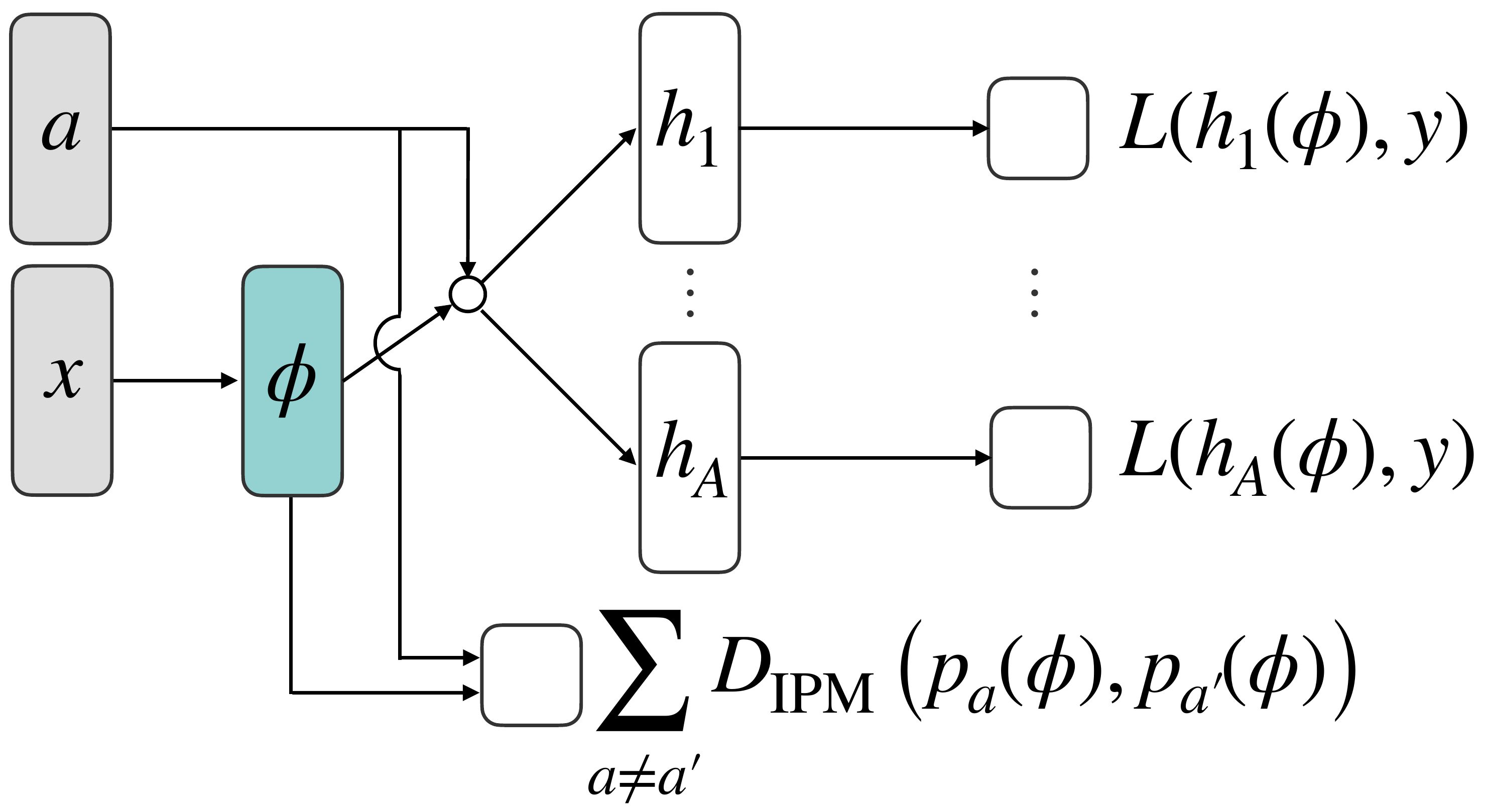}
\label{fig:structure_cfr}
  }
  \subfigure[Proposed architecture]{
  \centering
\includegraphics[keepaspectratio,width=60mm]{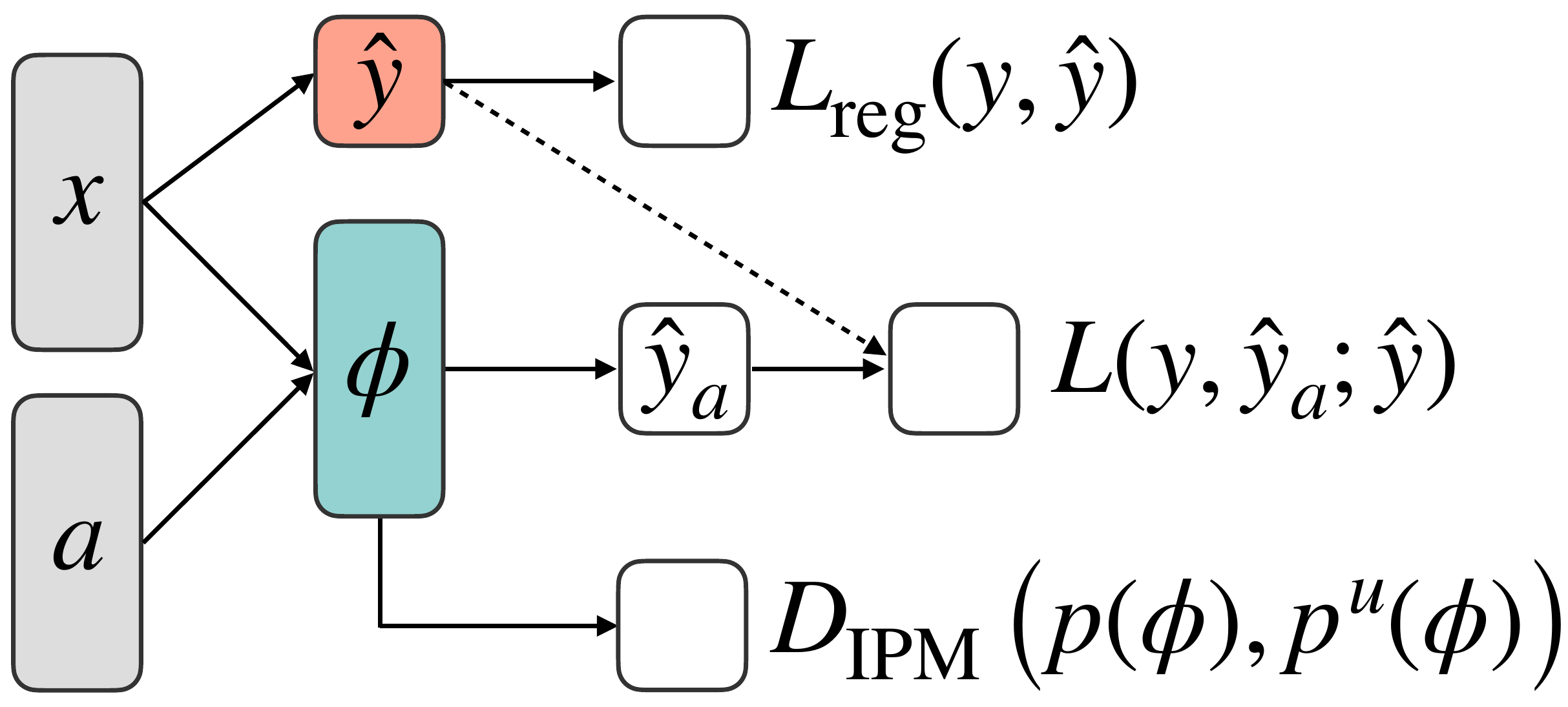}
\label{fig:general_structure}
  }
  \label{fig:structure}
  \caption{Network structures of Counterfactual regression for ITE \cite{shalit2017estimating,schwab2018perfect}
   \protect\footnote{As for the representation balancing regularizer $\sum_{a\neq a^\prime}D_{\mathrm{IPM}}$, \cite{shalit2017estimating} assumed $|\mathcal{A}|=2$ and \cite{schwab2018perfect} extended it to $|\mathcal{A}|\ge 2$, but both assumed the existence of a special action $a_0$ (e.g., no intervention), and only the distances between $ a_0 $ and other actions are taken into account. We assume no such special action, and thus the pairwise comparison is a reasonable extension.}
   and our proposed method.
   A broken line indicates no backpropagation.
   }
\end{figure}

We want to minimize the risk under the joint distribution with the uniform policy $p^u(x,a)=p(x)\mathrm{Unif}(\mathcal A),$ where $\mathrm{Unif}(\mathcal A) = 1/|\mathcal A|$ denotes the discrete uniform distribution, using sample from observational joint distribution $p(x,a)=p(x)\mu(a|x)$.
This can be seen as an unsupervised domain adaptation task from the training distribution $p(x,a)$ to the joint distribution with the uniform policy $p^u(x,a)$.
From this observation, we directly apply the representation regularizer to these distributions.
That is, we encourage matching 
$p(\phi_{x,a}):= \int \sum_{a^\prime} p(\phi_{x,a}|x^\prime,a^\prime)\mu(a^\prime|x^\prime)p(x^\prime) \mathrm{d}x^\prime$
and
$p^u(\phi_{x,a}):= \int \sum_{a^\prime} p(\phi_{x,a}|x^\prime,a^\prime) p^u(x^\prime,a^\prime) \mathrm{d}x^\prime,$
where $p(\phi_{x,a}|x^\prime,a^\prime)=\delta(\phi_{x,a}-\phi(x^\prime,a^\prime))$.

The resulting objective function is
\begin{align}
\begin{split}
\label{eq:objective}
	\min_{f} &\frac{1}{N} \sum_n L(f(x_n, a_n), y_n; g(x_n), \beta)
     + \alpha \cdot D_\mathrm{IPM} \left( \{\phi(x_n,a_n)\}_n, \{\phi(x_n,a^u_n)\}_n \right)
     + \mathfrak{R}(f),
\end{split}
\end{align}
where $L$ is the empirical instance-wise version of (\ref{eq:uniform_expected_risk}), $a^u_n$ is sampled from $p^u(a|x=x_n)$, and $\mathfrak{R}$ is a regularizer.
We utilize the Wasserstein distance, which is an instance of the IPM, as the discrepancy measure of the representation distributions, as in \cite{shalit2017estimating}.
Specifically, we use an entropy relaxation of the exact Wasserstein distance, called Sinkhorn distance \cite{cuturi2013sinkhorn}, for the compatibility with the gradient-based optimization.
The resulting learning flow is shown in Algorithm~\ref{alg1}.
A theoretical analysis for our representation balancing regularization can be found in Appendix~\ref{sec:theory}.

\begin{algorithm}[tb]
  \caption{Unbiased learning of outcomes}
  \label{alg1}
  \begin{algorithmic}[1]
    \REQUIRE Observational data $D = \{(x_n, a_n, y_n)\}_n$, hyperparameters $\alpha$ and $\beta$
    \ENSURE Trained network parameter $W$
    \STATE Train $g$ by an arbitrary supervised learning method with $D^\prime = \{(x_n, y_n)\}_n$, e.g.: \\
    $g = \argmin_{g^\prime} \sum (y_n-g^\prime(x_n))^2.$
  	\WHILE{Convergence criteria is not met}
    \STATE Sample mini-batch $\{n_1,\ldots,n_b\} \subset \{1,\ldots,N\}$.
    \STATE Calculate the gradient of the supervised loss $L$ in (\ref{eq:objective}): \\
    $g_1 = \nabla_{W} \frac{1}{b} \sum L(f(x_{n_i},a_{n_i};W), y_{n_i}; g(x_{n_i}), \beta).$ \\
    \STATE Sample uniformly random action \\ $\{a^u_1,\ldots,a^u_b\} \sim  \mathcal{A}^{b}$.
    \STATE Calculate the gradient of the representation balancing regularizer (e.g., Sinkhorn distance \cite{cuturi2013sinkhorn}): \\
    $g_2 = \nabla_{W} D_\mathrm{IPM}(\{\phi(x_{n_i},a_{n_i};W)\}, \{\phi(x_{n_i},a^u_{i};W)\}).$ \\
    \STATE Obtain step size $\eta$ with an optimizer (e.g., Adam \cite{kingma2015adam})
    \STATE $W \leftarrow [W - \eta(g_1+ \alpha  g_2)].$
    \STATE Check convergence criterion

    \ENDWHILE
    \STATE {\bf return} $W$
  \end{algorithmic}
\end{algorithm}

\section{Experiments}
We investigated the effectiveness of our method through synthetic and semi-synthetic experiments.
Both datasets were newly designed by us for the problem setting with a large action space.

\subsection{Experimental setup}

{\bf Compared methods.}
We compared our proposed method (RMNet) with ridge linear regression (OLS), random forests \cite{breiman2001random}, Bayesian additive regression trees (BART) \cite{hill2011bayesian}, naive deep neural network (S-DNN), naive DNN with multi-head architecture for each actions (M-DNN) (a.k.a. TARNET \cite{shalit2017estimating}), and straightforward extensions of the existing action-wise representation balancing method (counterfactual regression network (CFRNet)) \cite{shalit2017estimating}. 
We also made comparisons with the methods in which each one component of our proposed method was removed from the loss function, i.e., $\mathrm{MSE}$ (``w/o MSE''), $L_{cl}$ (``w/o ER''), and $D_\mathrm{IPM}$ (``w/o $D_\mathrm{IPM}$''), to clarify the contributions of each component.
For the main proposed method (RMNet), we equally weighted ER and MSE ($\beta=0.5$).
The strength of representation balancing regularizer $\alpha$ in CFRNet and proposed method was selected from $[0.1,0.3,1.0,3.0,10.0]$.
Other specification of DNN parameters can be found in Appendix~\ref{sec: experimental details}.

{\bf Evaluation.}
We used the normalized mean CG (nmCG) as the main metric, defined as follows.
\begin{align*}
  \mathrm{Normalized~CG}@k := \sum_{x,a:\mathrm{rank}(f(x,a))\le k} y_a(x) \Bigg/ \sum_{x,a:\mathrm{rank}(y_a(x))\le k} y_a(x).
\end{align*}
The normalized mean CG is proportional to the mean CG (\ref{eq:def_cg}) except that the expected outcomes are replaced with the actual ones.
We can see $\mathrm{Normalized~CG}@k \le 1$ from the definition of $\mathrm{rank}(y_a(x))$.
Since we have standardized the outcome, the chance rate is $\mathrm{Normalized~CG}@k=0.$
In addition to nmCG, we have also evaluated with respect to the uniform MSE.
The validation and the model selection was based on the mean CG, including the results in MSE.

{\bf Infrastructure.}
All the experiments were run on a machine with 28 CPUs (Intel(R) Xeon(R) CPU E5-2680 v4 @ 2.40GHz), 250GB memory, and 8 GPUs.

\subsection{Synthetic experiment}
{\bf Dataset.}
We prepared seven biased datasets in total to examine the robustness of the proposed and baseline methods. 
For detailed generation process, see Appendix~\ref{sec: experimental details}.
The feature space and the action space are fixed to $\mathbb R^5$ and $\{0,1\}^5$, respectively.
The sample sizes for $x$ were 1,000 for training, 100 for validation, 200 for testing.
For training, only one actions and the corresponding outcomes are sampled as follows.
Six of the settings have generalized linear models $y = f(x_\Upsilon,a_\Upsilon) + \varepsilon$, where $\cdot_\Upsilon$ denotes one-dimensional representations of $x$ and $a$.
The function $f$ is linear in three of them ($f(x_\Upsilon,a_\Upsilon) = a_\Upsilon - 2 x_\Upsilon$) and quadratic with respect to $a_\Upsilon$ in the rest ($f(x_\Upsilon,a_\Upsilon) = a_\Upsilon^2 - 2 x_\Upsilon$).
The last setting is a bilinear model $y = x^\top W a + \varepsilon.$
We set sampling biases 
as $p(a|x) \propto \exp(10\left|x_\Sigma - a_\Sigma\right|),$ where $\cdot_\Sigma$ denotes another representations of $x$ and $a$.
The three settings for linear and quadratic patterns correspond to the relation between $\cdot_\Sigma$ and $\cdot_\Upsilon$ as illustrated in Fig.~\ref{fig:graph_synthetic_data_simple}--\subref{fig:graph_synthetic_data_ps}, i.e.,
$x_\Sigma=x_\Upsilon$ ($=: x_\Delta$) in Setup-A and C, and $a_\Sigma=a_\Upsilon$ ($=: a_\Delta$) in Setup-B and C.
These relations of variables were designed to reproduce spurious correlation, which may mislead the deicision-making as follows.
In Setup-A, $a_\Sigma$ would have dependence to $y$ through the dependence to $x_\Delta$ despite $a_\Sigma$ itself has no causal relationship to $y$.
Samely, in Setup-B, $x_\Sigma$ would have dependence to $y$ through $a_\Delta$, and the causal effect of $a_\Delta$ may appear discounted.
In Setup-C, the causal effect of $a_\Delta$ might appear to be the opposite, as illustrated in Fig.~\ref{fig:illutration_linear_C}.

\begin{figure}[tb]
  \centering
  \centering
  \subfigure[Setup-A]{
    \includegraphics[keepaspectratio, width=20mm]{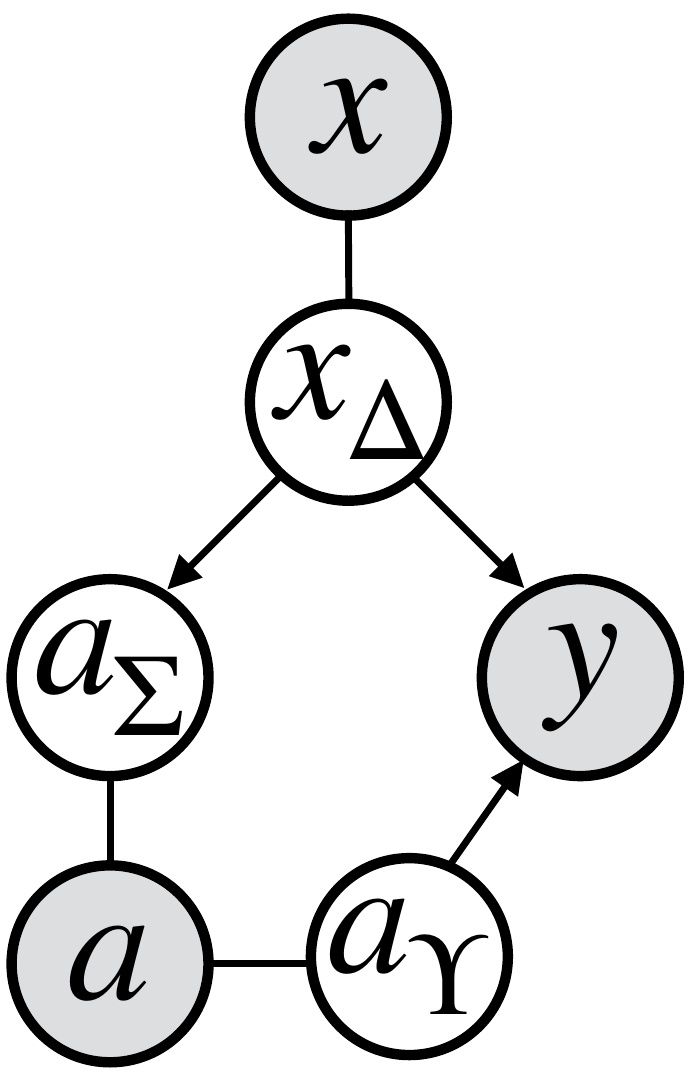}
    \label{fig:graph_synthetic_data_simple}
  }
  \subfigure[Setup-B]{
    \includegraphics[keepaspectratio, width=20mm]{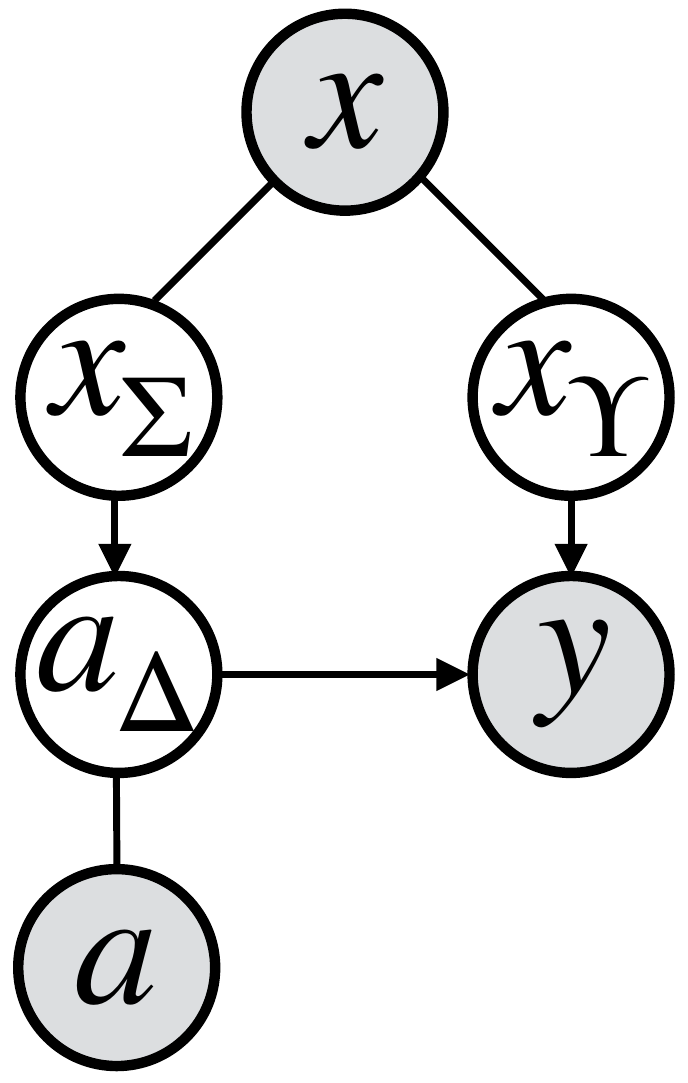}
    \label{fig:graph_synthetic_data_ko}
  }
  \subfigure[Setup-C]{
    \includegraphics[keepaspectratio, width=20mm]{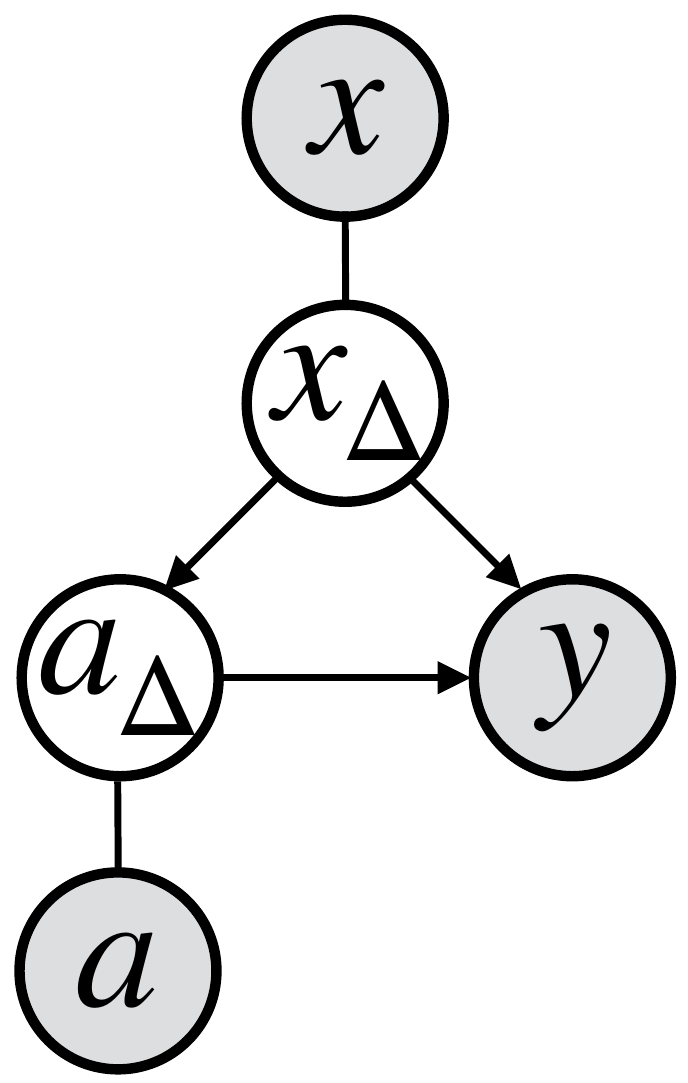}
    \label{fig:graph_synthetic_data_ps}
  }
  \subfigure[Illustration of Linear-C]{
    \includegraphics[keepaspectratio, width=40mm]{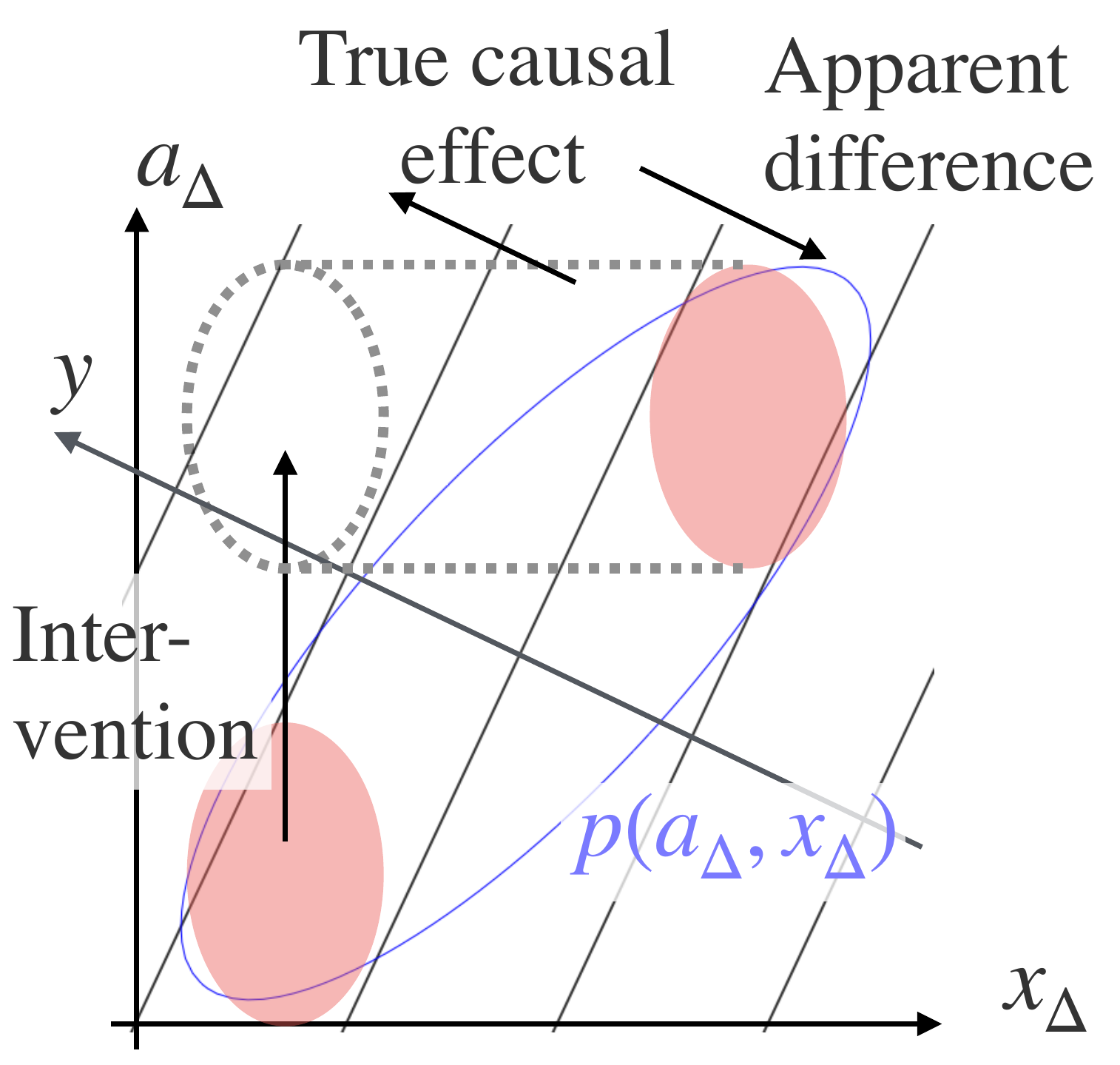}
    \label{fig:illutration_linear_C}
  }
  \label{fig:graph_synthetic_data}
  \caption{\subref{fig:graph_synthetic_data_simple}--\subref{fig:graph_synthetic_data_ps} The data generation models for synthetic experiment. Shaded variables denote the accessible variables in training.
  Non-shaded variables are latent one-dimensional representations of $x$ and $a$.
  \subref{fig:illutration_linear_C} Illustration of how Linear-C setting can mislead learners via sampling bias.
  }
\end{figure}

{\bf Result.}
As can be seen in Table~\ref{tb:cg all synthetic dataset}, our proposed method achieved the best performance or compared favorably under all settings.
The results in MSE is also shown in Appendix~\ref{sec:additional experimental results}.
In the linear settings, OLS achieved on par with the oracle (nmCG=1), since the model class is correctly specified, and its performance dropped under the nonlinear settings.
In real-world situations where the class of the true function is unknown, versatility to the true function classes would be a significant strength of the proposed method.
Under the setting of Linear-C, some of the compared methods performed below the chance rate ($\le 0$).
This is maybe because Linear-C is designed to mislead 
as illustrated in Fig.~\ref{fig:illutration_linear_C}.
Here we would like to mention that Linear-C is not unrealistic, e.g., doctors (the past decision-makers) are likely to give stronger medicines ($a_\Delta$ is large) to more serious patients ($x_\Delta$ is large), and the stronger medicines might appear to rather worsen the patients' health ($y$).
In Linear-C and Quadratic-C, the performance of RMNet was worsened by using $D_\mathrm{IPM}$.
This might be because, in Setup-C, the empirical loss ($L$ in \eqref{eq:objective}) and the regularizer $D_\mathrm{IPM}$ might conflict, i.e., extracting both $x_\Delta$ and $a_\Delta$ is needed for better prediction, which increase $D_\mathrm{IPM}$ due to the bias.
In recent studies \cite{Hassanpour2020Learning, zhang2020learning,pmlr-v89-johansson19a}, it is argued that requiring low IPM is unnecessarily strong, and alternatives for IPM are proposed.
Thus, there is a room for further improvement left in this direction for future work.

\begin{table*}[tb]
  \caption{Synthetic results on normalized mean CG@1 (larger is better and a maximum at one) and its standard error in ten data generations. Best and second-best methods are in bold.}
  \label{tb:cg all synthetic dataset}
  \centering
  \scalebox{0.8}[0.8]{
    \begin{tabular}{|l|rrrrrrr|}
      \toprule
      Method &                 Linear-A &              Linear-B &              Linear-C &            Quadratic-A&   Quadratic-B&         Quadratic-C&               Bilinear \\
      \midrule
      OLS                           &  {\bf 0.99 {\scriptsize $\pm$ .00}} &  {\bf 1.00 {\scriptsize $\pm$ .00}} &  {\bf 1.00 {\scriptsize $\pm$ .00}} &        0.20 {\scriptsize $\pm$ .10} &        0.68 {\scriptsize $\pm$ .11} &        0.80 {\scriptsize $\pm$ .13} &       $-$0.00 {\scriptsize $\pm$ .01} \\
      Random Forest                 &        0.52 {\scriptsize $\pm$ .10} &        0.46 {\scriptsize $\pm$ .06} &       $-$0.89 {\scriptsize $\pm$ .11} &        0.71 {\scriptsize $\pm$ .10} &        0.24 {\scriptsize $\pm$ .03} &        0.90 {\scriptsize $\pm$ .06} &        0.64 {\scriptsize $\pm$ .04} \\
      BART                          &        0.69 {\scriptsize $\pm$ .12} &  {\bf 0.99 {\scriptsize $\pm$ .00}} &       $-$1.03 {\scriptsize $\pm$ .08} &        0.54 {\scriptsize $\pm$ .15} &  {\bf 0.87 {\scriptsize $\pm$ .07}} &  {\bf 0.99 {\scriptsize $\pm$ .00}} &        0.02 {\scriptsize $\pm$ .04} \\
      M-DNN                         &        0.40 {\scriptsize $\pm$ .16} &        0.76 {\scriptsize $\pm$ .09} &        0.07 {\scriptsize $\pm$ .09} &        0.77 {\scriptsize $\pm$ .09} &        0.45 {\scriptsize $\pm$ .14} &        0.62 {\scriptsize $\pm$ .13} &        0.25 {\scriptsize $\pm$ .10} \\
      S-DNN                         &        0.83 {\scriptsize $\pm$ .11} &        0.85 {\scriptsize $\pm$ .08} &        0.64 {\scriptsize $\pm$ .18} &        0.78 {\scriptsize $\pm$ .09} &        0.52 {\scriptsize $\pm$ .08} &        0.70 {\scriptsize $\pm$ .08} &        0.66 {\scriptsize $\pm$ .08} \\
      CFRNet                        &        0.10 {\scriptsize $\pm$ .23} &        0.72 {\scriptsize $\pm$ .16} &        0.06 {\scriptsize $\pm$ .10} &        0.52 {\scriptsize $\pm$ .18} &        0.30 {\scriptsize $\pm$ .12} &        0.53 {\scriptsize $\pm$ .15} &        0.09 {\scriptsize $\pm$ .08} \\
      \midrule
      RMNet                         &  {\bf 0.96 {\scriptsize $\pm$ .01}} &        0.98 {\scriptsize $\pm$ .01} &        0.76 {\scriptsize $\pm$ .07} &  {\bf 0.95 {\scriptsize $\pm$ .02}} &  {\bf 0.87 {\scriptsize $\pm$ .03}} &        0.90 {\scriptsize $\pm$ .05} &  {\bf 0.83 {\scriptsize $\pm$ .02}} \\
      RMNet (w/o MSE)                &        0.94 {\scriptsize $\pm$ .01} &        0.89 {\scriptsize $\pm$ .09} &        0.47 {\scriptsize $\pm$ .09} &        0.93 {\scriptsize $\pm$ .02} &        0.86 {\scriptsize $\pm$ .03} &        0.83 {\scriptsize $\pm$ .05} &        0.75 {\scriptsize $\pm$ .07} \\
      RMNet (w/o ER)                 &        0.90 {\scriptsize $\pm$ .05} &        0.84 {\scriptsize $\pm$ .08} &        0.60 {\scriptsize $\pm$ .13} &        0.88 {\scriptsize $\pm$ .05} &        0.55 {\scriptsize $\pm$ .07} &        0.71 {\scriptsize $\pm$ .08} &        0.56 {\scriptsize $\pm$ .09} \\
      RMNet (w/o $D_{\mathrm{IPM}}$) &        0.91 {\scriptsize $\pm$ .03} &        0.98 {\scriptsize $\pm$ .01} &  {\bf 0.87 {\scriptsize $\pm$ .05}} &  {\bf 0.94 {\scriptsize $\pm$ .01}} &        0.86 {\scriptsize $\pm$ .03} &  {\bf 0.92 {\scriptsize $\pm$ .04}} &  {\bf 0.83 {\scriptsize $\pm$ .02}} \\          

        \bottomrule
      \end{tabular}
  }
\end{table*}

\subsection{Semi-synthetic experiment}
{\bf Dataset (GPU kernel performance).}
For semi-synthetic experiment, we used the SGEMM GPU kernel performance dataset \cite{nugteren2015cltune,ballester2017sobol}, which has 14 feature attributes of GPU kernel parameters and four target attributes of elapsed times in milliseconds for four independent runs for each combination of parameters. 
We used the inverse of the mean elapsed times as the outcome.
Then we had 241.6k instances in total.
By treating some of the feature attributes as action dimensions, we got a {\it complete} dataset, which has all the entries (potential outcomes) in Fig.~\ref{fig:sample_table_combi_single} observed.
Then we composed our semi-synthetic dataset by biased subsampling of only one action $a$ and the corresponding potential outcome $y_a$ for each $x$.
The details of this preprocess can be found in Appendix~\ref{sec: experimental details}.

The sampling policy in the training data was
    $p(a|x,y) \propto \exp(-10 |y - [x^\top,a^\top]^\top w|),$
where $w$ is sampled from $\mathcal{N}(0,1)^{d+m}$. 
This policy reproduced a spurious correlation; that is, a random projection of the feature and the action $[x^\top,a^\top]^\top w$ is likely to have little causal relationship with $y$ but does have a strong correlation due to the sampling policy.
This policy also depends on $y$, which violates the unconfoundedness assumption.
Although, the dataset we used has a low noise level, i.e., $y \simeq g(x,a)$ for some function $g$, and thus $p(a|x,y) \simeq p(a|x,g(x)).$

We split the feature set $\{x_n\}_n$ into 80\% for training, 5\% for validation, and 15\% for testing.
Then, for the training set, only one action $a$ and the corresponding outcome $y$ was taken for each $x$.
The resulting training sample size for each setting of $m$ is listed in Table~\ref{tb:data_size_specification} in Appendix~\ref{sec: experimental details}.
We repeated the training and evaluation process ten times for different splits and samplings of $a$.

{\bf Result.}
As shown in Table~\ref{tb:all gpu dataset}, our proposed method outperformed the others in nmCG@1 in all cases.
In terms of MSE, S-DNN with the same backbone also achieved a high performance,
which demonstrates that the structure in Fig.~\ref{fig:general_structure} efficiently modeled the data.
The performance gains compared to ``w/o ER'' and ``w/o $D_\mathrm{IPM}$'' demonstrate the effectiveness of both of the components proposed in Section~\ref{sec:method}.
The superior performance of RMNet without MSE in the settings of $|\mathcal A| =16$ and $32$ indicates the room for optimizing $\beta$, which we fixed to 0.5.

\begin{table*}[tb]
  \caption{Semi-synthetic results on normalized mean CG@1 and MSE with the standard error in ten different samplings of the training data. Best and second-best methods are in bold.}
  \label{tb:all gpu dataset}
  \centering
  \scalebox{0.75}[0.75]{
    \begin{tabular}{|l|rrrr|rrrr|}
      \toprule
      {} & \multicolumn{4}{c}{Normalized mean CG@1} & \multicolumn{4}{|c|}{MSE} \\
      $|\mathcal A|$ & \multicolumn{1}{c}{8} & \multicolumn{1}{c}{16} & \multicolumn{1}{c}{32} & \multicolumn{1}{c|}{64} & \multicolumn{1}{c}{8} & \multicolumn{1}{c}{16} & \multicolumn{1}{c}{32} & \multicolumn{1}{c|}{64} \\
      Method         &                                 &                                 &                                 &                                 &                         &                         &                         &                         \\
      \midrule
      OLS                           &       $-$0.04 {\scriptsize $\pm$ .15} &       $-$0.08 {\scriptsize $\pm$ .20} &       $-$0.10 {\scriptsize $\pm$ .13} &       $-$0.01 {\scriptsize $\pm$ .10} &        1.12 {\scriptsize $\pm$ .12} &        1.89 {\scriptsize $\pm$ .26} &        1.70 {\scriptsize $\pm$ .26} &        5.86 {\scriptsize $\pm$ 1.10} \\
      Random Forest                 &        0.23 {\scriptsize $\pm$ .08} &        0.33 {\scriptsize $\pm$ .07} &        0.32 {\scriptsize $\pm$ .05} &        0.37 {\scriptsize $\pm$ .05} &        1.03 {\scriptsize $\pm$ .11} &        0.87 {\scriptsize $\pm$ .08} &        0.93 {\scriptsize $\pm$ .09} &        1.07 {\scriptsize $\pm$ .18} \\
      BART                          &        0.00 {\scriptsize $\pm$ .13} &        0.17 {\scriptsize $\pm$ .13} &        0.11 {\scriptsize $\pm$ .10} &        0.04 {\scriptsize $\pm$ .09} &        1.06 {\scriptsize $\pm$ .08} &        1.04 {\scriptsize $\pm$ .08} &        1.19 {\scriptsize $\pm$ .12} &        1.63 {\scriptsize $\pm$ .23} \\
      M-DNN                         &        0.41 {\scriptsize $\pm$ .05} &        0.48 {\scriptsize $\pm$ .06} &        0.31 {\scriptsize $\pm$ .07} &        0.37 {\scriptsize $\pm$ .05} &        0.78 {\scriptsize $\pm$ .05} &        0.84 {\scriptsize $\pm$ .02} &        0.83 {\scriptsize $\pm$ .02} &        0.84 {\scriptsize $\pm$ .02} \\
      S-DNN                         &        0.29 {\scriptsize $\pm$ .09} &        0.26 {\scriptsize $\pm$ .10} &        0.32 {\scriptsize $\pm$ .07} &        0.46 {\scriptsize $\pm$ .05} &  {\bf 0.75 {\scriptsize $\pm$ .12}} &  {\bf 0.60 {\scriptsize $\pm$ .09}} &  {\bf 0.74 {\scriptsize $\pm$ .06}} &        0.74 {\scriptsize $\pm$ .04} \\
      CFRNet                        &        0.50 {\scriptsize $\pm$ .06} &        0.39 {\scriptsize $\pm$ .14} &        0.39 {\scriptsize $\pm$ .10} &        0.35 {\scriptsize $\pm$ .05} &        0.79 {\scriptsize $\pm$ .02} &        0.81 {\scriptsize $\pm$ .02} &        0.87 {\scriptsize $\pm$ .01} &        0.86 {\scriptsize $\pm$ .01} \\
      \midrule
      RMNet                         &  {\bf 0.68 {\scriptsize $\pm$ .00}} &  {\bf 0.60 {\scriptsize $\pm$ .05}} &  {\bf 0.60 {\scriptsize $\pm$ .05}} &  {\bf 0.51 {\scriptsize $\pm$ .05}} &        0.77 {\scriptsize $\pm$ .00} &        0.76 {\scriptsize $\pm$ .09} &        0.84 {\scriptsize $\pm$ .02} &  {\bf 0.73 {\scriptsize $\pm$ .07}} \\
      RMNet (w/o MSE)                &  {\bf 0.68 {\scriptsize $\pm$ .00}} &  {\bf 0.66 {\scriptsize $\pm$ .01}} &  {\bf 0.67 {\scriptsize $\pm$ .01}} &  {\bf 0.50 {\scriptsize $\pm$ .05}} &        0.76 {\scriptsize $\pm$ .00} &        0.75 {\scriptsize $\pm$ .06} &        0.85 {\scriptsize $\pm$ .01} &        0.80 {\scriptsize $\pm$ .08} \\
      RMNet (w/o ER)                 &  {\bf 0.68 {\scriptsize $\pm$ .00}} &        0.45 {\scriptsize $\pm$ .08} &        0.56 {\scriptsize $\pm$ .05} &        0.49 {\scriptsize $\pm$ .05} &        0.77 {\scriptsize $\pm$ .00} &  {\bf 0.67 {\scriptsize $\pm$ .08}} &        0.88 {\scriptsize $\pm$ .02} &        0.75 {\scriptsize $\pm$ .05} \\
      RMNet (w/o $D_{\mathrm{IPM}}$) &        0.33 {\scriptsize $\pm$ .09} &        0.27 {\scriptsize $\pm$ .10} &        0.40 {\scriptsize $\pm$ .07} &        0.48 {\scriptsize $\pm$ .06} &  {\bf 0.72 {\scriptsize $\pm$ .12}} &        0.81 {\scriptsize $\pm$ .18} &  {\bf 0.78 {\scriptsize $\pm$ .08}} &  {\bf 0.71 {\scriptsize $\pm$ .06}} \\

        \bottomrule
      \end{tabular}
  }
\end{table*}

\section{Summary}
In this paper, we have investigated causal inference on a large action space with a focus on the decision-making performance.
We first defined and analyzed the performance in decision-making brought about by a model through a simple prediction-based decision-making policy.
Then we showed that the bound only with the regression accuracy (MSE) gets looser as the action space gets large, which illustrates the difficulty of utilizing causal inference in decision-making in a large action space.
At the same time, however, our bound indicates that minimizing not only the regression loss but also the classification loss leads to a better performance. 
From this viewpoint, our proposed method minimizes both the regression and classification losses, specifically, soft cross-entropy with a teacher label indicating whether an observed outcome is better than the estimated conditional average outcome in the observational distribution under a given feature.
Experiments on synthetic and semi-synthetic datasets, which is designed to have misleading spurious correlations, demonstrated the superior performance of the proposed method with respect to the decision performance and the regression accuracy.

\bibliography{reference}

\bibliographystyle{abbrvnat}

\newpage

\appendix

\section{Proof of Proposition~\ref{thm:relation_cg_mse}}
\label{sec:proof_regret_bound}
\begin{proposition}
  The expected regret will be bounded with uniform MSE in (\ref{eq:unbiased mse}) as
  \begin{align*}
    \mathrm{Regret}@k(f) \le \frac{|\mathcal A|}{k} \sqrt{\mathrm{ER}_k^u(f) \cdot \mathrm{MSE}^u(f)},
  \end{align*}
  where $\mathrm{ER}_k^u(f)$ is the top-$k$ classification error rate, i.e.,
  \begin{align*}
    \mathrm{ER}_k^u(f) := \E_x \left[ \frac{1}{|\mathcal{A}|} \sum_{a \in \mathcal{A}} I\left((\mathrm{rank}(y_{a})\le k) \oplus (\mathrm{rank}(f(x,a))\le k)\right) \right],
  \end{align*}
  where $\oplus$ denotes the logical XOR.
\end{proposition}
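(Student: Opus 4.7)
The plan is to reduce the bound to two applications of Cauchy--Schwarz, sandwiching a combinatorial argument that pairs the oracle-best actions the model misses against the model-best actions the model wrongly selects.

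First I would fix $x$ and let $T^\star := \{a:\mathrm{rank}(\bar y_a)\le k\}$ and $T^f := \{a:\mathrm{rank}(f(x,a))\le k\}$, so that the per-$x$ regret contribution equals $\sum_{a\in T^\star\setminus T^f}\bar y_a-\sum_{b\in T^f\setminus T^\star}\bar y_b$. Setting $S:=T^\star\setminus T^f$ and $R:=T^f\setminus T^\star$, we have $|S|=|R|$, and by definition of the two top-$k$ sets, every $a\in S$ and $b\in R$ satisfy $\bar y_a\ge\bar y_b$ while $f(x,a)\le f(x,b)$. The key step is to choose any bijection $\tau:S\to R$ and write, for each pair,
\[
\bar y_a-\bar y_{\tau(a)}=\bigl(\bar y_a-f(x,a)\bigr)+\bigl(f(x,a)-f(x,\tau(a))\bigr)+\bigl(f(x,\tau(a))-\bar y_{\tau(a)}\bigr),
\]
and drop the middle term because it is $\le 0$. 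Summing and using triangle inequality yields
\[
\sum_{a\in T^\star}\bar y_a-\sum_{a\in T^f}\bar y_a\le\sum_{a\in S\cup R}|\bar y_a-f(x,a)|.
\]

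Next I would let $e(x):=|S\cup R|=\sum_a I\bigl((\mathrm{rank}(\bar y_a)\le k)\oplus(\mathrm{rank}(f(x,a))\le k)\bigr)$ be the per-$x$ XOR count and apply Cauchy--Schwarz over actions,
\[
\sum_{a\in S\cup R}|\bar y_a-f(x,a)|\le\sqrt{e(x)}\cdot\sqrt{\sum_{a\in\mathcal A}(\bar y_a-f(x,a))^2},
\]
where I enlarge the inner sum from $S\cup R$ to $\mathcal A$. Taking $\E_x$, dividing by $k$, and applying Cauchy--Schwarz a second time over $x$ gives
\[
\mathrm{Regret}@k(f)\le\frac{1}{k}\sqrt{\E_x[e(x)]}\cdot\sqrt{\E_x\Bigl[\sum_{a\in\mathcal A}(\bar y_a-f(x,a))^2\Bigr]}.
\]
Since $\E_x[e(x)]=|\mathcal A|\,\mathrm{ER}_k^u(f)$ and, via the bias-variance decomposition $\E_{y_a|x}[(y_a-f)^2]=\mathrm{Var}(y_a|x)+(\bar y_a-f)^2$, we also have $\E_x\bigl[\sum_a(\bar y_a-f)^2\bigr]\le|\mathcal A|\,\mathrm{MSE}^u(f)$, the desired bound $\frac{|\mathcal A|}{k}\sqrt{\mathrm{ER}_k^u(f)\cdot\mathrm{MSE}^u(f)}$ drops out.

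The main obstacle is the combinatorial pairing step: one must exploit both orderings (oracle and predicted) on $S\cup R$ simultaneously to kill the $f(x,a)-f(x,\tau(a))$ cross term, so that the regret is controlled only by the $\ell_1$ regression error restricted to the misclassified action set rather than the full action set. Once that is in place, the two Cauchy--Schwarz applications and the variance domination are routine, and the discrepancy between the proposition's $\mathrm{rank}(y_a)$ and the expected-outcome rank $\mathrm{rank}(\bar y_a)$ can be handled either by reinterpreting the ranking as being on $\bar y_a$ or by absorbing the extra noise into the $\mathrm{MSE}^u$ term.
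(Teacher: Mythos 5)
Your proof is correct and takes essentially the same route as the paper's: the paper also inserts the nonnegative quantity $\sum_{i\le k}\bigl(f(x,\hat a_i^\ast)-f(x,a_i^\ast)\bigr)$ (your ``drop the middle term'' step, phrased via an inner product with the signed indicator vector of the symmetric difference rather than a bijection), then applies Cauchy--Schwarz over actions and again over $x$. Your explicit handling of the $\bar y_a$ versus $y_a$ discrepancy via the bias--variance decomposition is a small point of extra care that the paper's proof glosses over, but it does not change the argument.
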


\begin{proof}

Here we denote the true and the predicted $i$-th best action by $a^\ast_i$ and $\hat a^\ast_i$, respectively;
i.e., $\mathrm{rank}(y_{a^\ast_i})=\mathrm{rank}(f(x,\hat{a}^\ast_i))=i$.
For all $k \in [|\mathcal{A}|]$, the target-wise regret can be bounded as follows.
\begin{align}
  k \cdot \mathrm{Regret}@k(x) &:= \sum_{i \le k} \left(y_{a^\ast_i} - y_{\hat a^\ast_i} \right) \nonumber \\
  &\le \sum_{i \le k} \left(y_{a^\ast_i} - y_{\hat a^\ast_i} \right) + \sum_{i\leq k} \left(f_{\hat{a}_i^{\ast}}-f_{a_i^{\ast}}\right) \label{eq:proof_insert_f}\\
  &= \sum_{i \le k} \left\{ \left(y_{a^\ast_i} - f_{ a^\ast_i} \right) - \left(y_{\hat a^\ast_i} - f_{\hat a^\ast_i}\right) \right\}, \nonumber
\end{align}
where $f_a = f(x,a)$.
Inequality (\ref{eq:proof_insert_f}) is from the definition of $\hat a^\ast_i$; i.e., $\sum_{i\le k} f_{\hat a^\ast_i}$ is the summation of the top-$k$ $f_a$s out of $\{f_a\}_{a \in \mathcal{A}}$, which must be larger than or equal to the summation of $k$ $f_a$'s that are not necessarily top-$k$, $\sum_{i\le k} f_{a^\ast_i}$.
Let $s= \sum_{i\le k} \left( {\bf 1}_{a^\ast_i} - {\bf 1}_{\hat a^\ast_i}\right),$ where ${\bf 1}_a$ is the one-hot encoding of $a$, and $e$ be the error vector that consists of $e_a=y_a - f_a$. The r.h.s. is bounded as
\begin{align*}
  \text{r.h.s.} &= \left\langle s, e \right\rangle \\
  &\le \| s  \|_2 \cdot \|  e\|_2 \\
  &= |\mathcal A| ~ \sqrt{\mathrm{ER}^u_k(f,x) \cdot \mathrm{MSE}^u_k(f,x)},
\end{align*}
where $\mathrm{ER}^u_k(f,x)$ and $\mathrm{MSE}^u_k(f,x)$ are the target-wise error rate and MSE, respectively.
The inequality comes from the Cauchy–Schwarz inequality. 
By taking the expectation with respect to $x$ and applying Jensen's inequality, we get the proposition.
\end{proof}

\section{Error analysis for representation-based domain adaptation from observational data to the uniform average on action space}
\label{sec:theory}
By performing the representation balancing regularization, our method enjoys better generalization through minimizing the upper bound of the error on the test distribution (under uniform random policy).
We briefly show why minimizing the combination of empirical loss on training and the regularization of distribution (\ref{eq:objective}) results in minimizing the test error.
First, we define the point-wise loss function under a hypothesis $h$ and an extractor $\phi(\cdot,\cdot)$, which defines the representation $\phi = \phi(x,a)$, as
\begin{align*}
    \ell_{h}^a(\phi) := \int_{\mathcal{Y}} L(Y_a, h(\phi))p(Y_a|x) \mathrm{d} Y_a.
\end{align*}
Then, the expected losses for the training (source) and the test distribution (target) are
\begin{align*}  \begin{split}
    \epsilon^s(h):=&\int_{\mathcal{X,A},\Phi} \ell_{h}^a(\phi)p(\phi |x,a)p(x,a) \mathrm{d}\phi \mathrm{d}x\mathrm{d}a, \\
    \epsilon^t(h):=&\int_{\mathcal{X,A},\Phi} \ell_{h}^a(\phi)p(\phi |x,a)p^u(a|x)p(x) \mathrm{d}\phi \mathrm{d}x\mathrm{d}a.
  \end{split}
\end{align*}
We assume there exists $B > 0$ such that $\frac{1}{B} \ell_{h}^a(\phi) \in G$ for the given function space $G$.
Then the integral probability metric $\mathrm{IPM}_G$ is defined for $\phi \in \Phi = \{\phi(x,a)|p(x,a)>0\}$ as
\begin{align*}
  \mathrm{IPM}_G (p_1,p_2) := \sup_{g \in G} \left|\int_{\Phi}g(\phi)(p_1(\phi)-p_2(\phi)) \mathrm{d} \phi\right|.
\end{align*}
The difference between the expected losses under training and test distributions are then bounded as
\begin{align*}
&\epsilon^t(h)-\epsilon^s(h)\\
&=\int_{\Phi} \ell_{h}^a(\phi)\left(p^u(\phi)-p(\phi)\right) \mathrm{d}\phi \\
&=B \int_{\Phi} \frac{1}{B} \ell_{h}^a(\phi)\left(p^u(\phi)-p(\phi)\right) \mathrm{d}\phi \\
&\le B \sup_{g \in G} \left| \int_{\Phi} g(\phi)\left(p^u(\phi)-p(\phi)\right) \mathrm{d}\phi \right|\\
&= B \cdot \mathrm{IPM}_G \left(p(\phi), p^u(\phi) \right).
\end{align*}
For $G$, we use the 1-Lipshitz function class, after which $\mathrm{IPM}_G$ is the Wasserstein distance $D_{\mathrm{wass}}$.
Although $B$ is unknown, the hyperparameter tuning of the regularization strength $\alpha$ in (\ref{eq:objective}) can achieve the tuning of $B$.

\section{Experimental details}
\label{sec: experimental details}
{\bf Synthetic data generation process.}
Our synthetic datasets are built as follows.
\begin{itemize}
  \item[1]Sample $x \sim \mathcal{N}(0,1)^d,$ where $d=5$.
  \item[2] Sample $a \in \{0,1\}^m$, where $m=5$, from $p(a|x) \propto \exp(10\left|x_\Sigma - a_\Sigma\right|),$ where $x_\Sigma$ and $a_\Sigma$ are the following.
  \begin{itemize}
    \item[2-1] In settings other than Setup-B, $x_\Sigma=x_\Delta=w_x^\top x,$ where $w_x \sim \mathcal N(0,1/d)^d$.
    \item[2-2] In Setup-B, $x_\Sigma = x_1$, i.e., only the first dimension in $x$ is used to bias $a$.
    \item[2-3] $a_\Sigma=w_a^\top a,$ where $w_a \sim \mathcal N(0,1/m)^m$.
  \end{itemize}
  \item[3] Calculate the expected oucome $y_a=f(x,a),$ where we examine three types of functions $f$, namely, Linear, Quadratic, and Bilinear. In the Linear and Quadratic types, $f(x,a) = f(x_\Upsilon, a_\Upsilon),$ where $x_\Upsilon$ and $a_\Upsilon$ are one-dimensional representations of $x$ and $a$, respectively.
  \begin{itemize}
    \item[3-1] In Setup-B, $x_\Upsilon = w_{x, 2:d}^\top x_{2:d},$ where $x_{2:d}$ denotes all dimensions other than the first dimension ($x_\Sigma$).
    \item[3-2] In settings other than Setup-B, $x_\Upsilon=x_\Sigma (=:x_\Delta)$.
    \item[3-3] In Setup-A, $a_\Upsilon = w_a^{\prime \top} a$ where $w_a^\prime \sim \mathcal N(0,1/m)^m.$
    \item[3-4] In settings other than Setup-A, $a_\Upsilon=a_\Sigma (=:a_\Delta)$.
    \item[3-5] In the Linear setting, $f(x_\Upsilon,a_\Upsilon)=a_\Upsilon - 2 x_\Upsilon.$
    \item[3-6] In the Quadratic setting, $f(x_\Upsilon,a_\Upsilon)=a_\Upsilon^2 - 2 x_\Upsilon.$
    \item[3-7] In the Bilinear setting, $f(x,a) = x^\top W a,$ where $W \sim \mathcal N(0,1/(dm))^{(d,m)}$
  \end{itemize}
  \item[4] Sample the observed outcome $y \sim \mathcal N(y_a|0.1).$
\end{itemize}

{\bf Details of semi-synthetic data}
We transformed the target attributes of elapsed times into the average speed as the outcome, i.e., $y=\frac{4}{\sum z_i}$, where $\{z_i\}_{1:4}$ are the original elapsed times.
Then we standardized $y$ and the features.
Each feature can take binary values or up to four different powers of two values.
Out of 1,327k total parameter combinations, only 241.6k feasible combinations are recorded.
We split these original feature dimensions into $a$ and $x$ as follows.
The dimension of the action space $m$ ranged from three to six, and the 8th, 11th, 12th, 13th, 14th, and 3rd dimensions are regarded as $a$ from the head in order (e.g., for $m=3$, the 8th, 11th, and 12th dimensions in the original features are regarded as $a$).
This split was for maximizing the overlap of $\mathcal{A}(x)$ among $\mathcal{X}$.

{\bf Other DNN parameters.}
The detailed parameters we used for DNN-based methods (S-DNN, M-DNN, CFRNet, and proposed) were as follows.
The backbone DNN structure had four layers for representation extraction and three layers for hypothesis with the width of 64 for the middle layers and the width of 10 for the representation $\phi$.
The batch size was 64, but only for CFRNet, it was 512 for the need to approximate the distributions for each action. 
The strength of our used L2 regularizer was $10^{-4}$.
We used Adam \cite{kingma2015adam} for the optimizer with the learning rate of $10^{-4}$.

\begin{table}[tb]
  \caption{Training sample size for each setting.}
  \label{tb:data_size_specification}
  \centering
  \scalebox{0.9}[0.9]{
    \begin{tabular}{|rrr|}
      \hline
      $m$ & $|\mathcal{A}|$ & \multicolumn{1}{c|}{ $N_{\mathrm{tr}} $}\\
      \hline
      3	&8	&24,160 \\
      4	&16	&12,080 \\
      5	&32	&6,040 \\
      6	&64	&3,591 \\
        \hline
      \end{tabular}
  }
\end{table}

\section{Derivation of Eq.~\ref{eq:ranking-to-clf}}
\label{sec:der-ranking-to-clf}
Recall
\begin{align*}
  \mathrm{ER}_k^u(f) &:= 
  \E_x\Bigg[\frac{1}{|\mathcal{A}|} \sum_{a \in \mathcal{A}} I\left((\mathrm{rank}(y_{a})\le k) \oplus (\mathrm{rank}(f(x,a))\le k)\right) \Bigg] .
\end{align*}
Since
\begin{align*}
  I\left((\mathrm{rank}(y_{a})\le k) \oplus (\mathrm{rank}(f(x,a))\le k)\right)
  &=I\left( (y_{a_k^\ast} \le y_a) \oplus (f(x,\hat a_k^\ast) \le f(x,a))\right) \\
  &=I\left( (y_{a_k^\ast} \le y_a) \oplus (y_{a_k^\ast} \le f(x,a)-f(x,\hat a_k^\ast) + y_{a_k^\ast})\right) \\
  &=\ell_{0-1}(y_a - y_{a_k^\ast}, f^\prime(x,a) - y_{a_k^\ast}),
\end{align*}
we have
\begin{align*}
  \mathrm{ER}_k^u(f) &=
  \E_{x}\Bigg[
  \frac{1}{|\mathcal{A}|} \sum_{a \in \mathcal{A}}
  \ell_{0-1}(y_a - y_{a_k^\ast}, f^\prime(x,a) - y_{a_k^\ast}) \Bigg] .
\end{align*}
Here $f^\prime$ satisfies the condition $f^\prime(x, \hat a_k^\ast) = y_{a_k^\ast}$, i.e., the $k$-th largest prediction of $f^\prime$ should be equal to $ y_{a_k^\ast}$.
Although, since $y_{a_k^\ast}$ is unobservable, we relax the optimization of $f^\prime$ in the function space that satisfies the condition into the optimization in the general function space.
Assuming that our function space includes the optimal function $f^\ast$ that minimizes 

\section{Additional experimental results}
\label{sec:additional experimental results}

\begin{table*}[tb]
  \caption{Synthetic results on MSE and its standard error in ten data generations. Best and second-best methods are in bold.}
  \label{tb:mse all synthetic dataset}
  \centering
  \scalebox{0.79}[0.79]{
    \begin{tabular}{|l|rrrrrrr|}
      \toprule
      Method &                 Linear-A &              Linear-B &              Linear-C &            Quadratic-A&   Quadratic-B&         Quadratic-C&               Bilinear \\
      \midrule
      OLS                           &  {\bf 0.01 {\scriptsize $\pm$ 0.00}} &  {\bf 0.01 {\scriptsize $\pm$ 0.00}} &  {\bf 0.01 {\scriptsize $\pm$ 0.00}} &        2.70 {\scriptsize $\pm$ 0.62} &   {\bf 9.91 {\scriptsize $\pm$ 4.53}} &  {\bf 10.89 {\scriptsize $\pm$ 4.16}} &        0.28 {\scriptsize $\pm$ 0.03} \\
      Random Forest                 &       20.29 {\scriptsize $\pm$ 5.34} &        3.19 {\scriptsize $\pm$ 0.54} &       17.48 {\scriptsize $\pm$ 5.14} &       16.83 {\scriptsize $\pm$ 5.58} &        12.59 {\scriptsize $\pm$ 5.18} &        19.80 {\scriptsize $\pm$ 4.87} &        0.24 {\scriptsize $\pm$ 0.03} \\
      BART                          &       14.59 {\scriptsize $\pm$ 3.80} &        0.67 {\scriptsize $\pm$ 0.23} &       14.30 {\scriptsize $\pm$ 3.33} &       14.62 {\scriptsize $\pm$ 3.82} &        11.58 {\scriptsize $\pm$ 4.51} &        18.64 {\scriptsize $\pm$ 3.77} &        0.50 {\scriptsize $\pm$ 0.10} \\
      M-DNN                         &       10.70 {\scriptsize $\pm$ 2.23} &        3.70 {\scriptsize $\pm$ 2.07} &       12.05 {\scriptsize $\pm$ 2.25} &       12.79 {\scriptsize $\pm$ 2.26} &        16.44 {\scriptsize $\pm$ 5.99} &        19.20 {\scriptsize $\pm$ 3.66} &        0.36 {\scriptsize $\pm$ 0.07} \\
      S-DNN                         &        0.64 {\scriptsize $\pm$ 0.23} &        1.04 {\scriptsize $\pm$ 0.46} &        2.61 {\scriptsize $\pm$ 1.21} &        5.18 {\scriptsize $\pm$ 2.25} &        16.75 {\scriptsize $\pm$ 5.87} &        16.43 {\scriptsize $\pm$ 3.12} &        0.13 {\scriptsize $\pm$ 0.03} \\
      CFRNet                        &       10.09 {\scriptsize $\pm$ 2.22} &        5.01 {\scriptsize $\pm$ 2.14} &       12.64 {\scriptsize $\pm$ 2.42} &       10.04 {\scriptsize $\pm$ 2.18} &        13.26 {\scriptsize $\pm$ 4.09} &        16.25 {\scriptsize $\pm$ 3.58} &        0.40 {\scriptsize $\pm$ 0.07} \\
      \midrule
      RMNet                         &  {\bf 0.30 {\scriptsize $\pm$ 0.08}} &  {\bf 0.48 {\scriptsize $\pm$ 0.09}} &        2.45 {\scriptsize $\pm$ 0.46} &  {\bf 1.75 {\scriptsize $\pm$ 0.96}} &        10.59 {\scriptsize $\pm$ 4.69} &        13.82 {\scriptsize $\pm$ 3.87} &  {\bf 0.10 {\scriptsize $\pm$ 0.01}} \\
      RMNet (w/o MSE)                &        0.46 {\scriptsize $\pm$ 0.14} &        2.31 {\scriptsize $\pm$ 1.46} &        3.14 {\scriptsize $\pm$ 0.79} &        2.07 {\scriptsize $\pm$ 1.00} &  {\bf 10.55 {\scriptsize $\pm$ 4.69}} &  {\bf 13.52 {\scriptsize $\pm$ 4.17}} &        0.13 {\scriptsize $\pm$ 0.04} \\
      RMNet (w/o ER)                 &        0.46 {\scriptsize $\pm$ 0.14} &        1.65 {\scriptsize $\pm$ 0.62} &        3.71 {\scriptsize $\pm$ 1.10} &        3.55 {\scriptsize $\pm$ 1.82} &        16.68 {\scriptsize $\pm$ 5.87} &        16.33 {\scriptsize $\pm$ 3.10} &        0.19 {\scriptsize $\pm$ 0.03} \\
      RMNet (w/o $D_{\mathrm{IPM}}$) &        0.50 {\scriptsize $\pm$ 0.14} &        0.54 {\scriptsize $\pm$ 0.10} &  {\bf 1.29 {\scriptsize $\pm$ 0.27}} &  {\bf 1.71 {\scriptsize $\pm$ 0.69}} &        11.87 {\scriptsize $\pm$ 4.38} &        14.43 {\scriptsize $\pm$ 3.87} &  {\bf 0.08 {\scriptsize $\pm$ 0.01}} \\
      \bottomrule
      \end{tabular}
  }
\end{table*}

{\bf Elapsed times compared to CFR}  
Figure \ref{fig:elapsed_time} shows the comparison in training time between the proposed method and CFRNet.
For CFRNet, the elapsed time grew when the size of the action space $|\mathcal{A}|$ became large.
The main reason for this is the calculation of distance between the representation distributions for each pair of actions $\sum_{a \neq a^\prime} D_{\mathrm{IPM}}\left(p_a(\phi), p_{a^\prime}(\phi)\right)$ in Fig.~\ref{fig:structure_cfr}.
The decrease of the elapsed time for RMNet is mainly due to the sample sizes shown in Table~\ref{tb:data_size_specification}.

\begin{figure}[tb]
\centering
 \includegraphics[keepaspectratio,width=70mm]{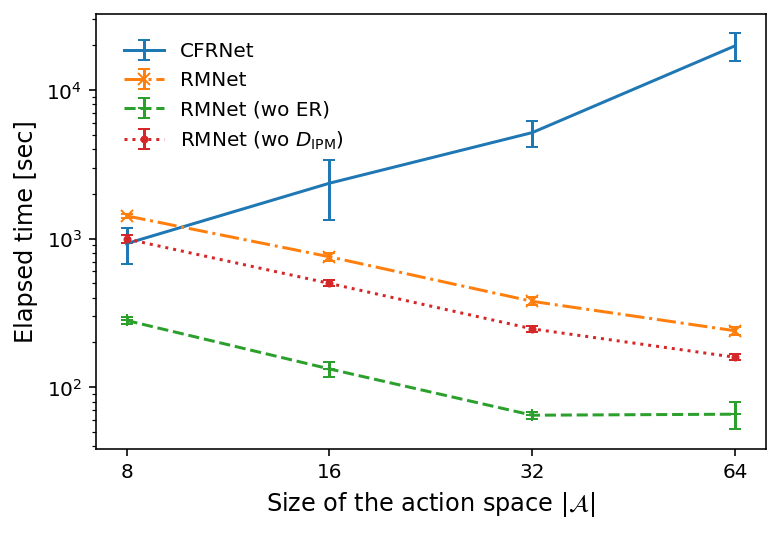}
 \caption{Elapsed time for training. Error bars indicate standard deviation.}
\label{fig:elapsed_time}
\end{figure}

\begin{figure}[tb]
  \centering
   \includegraphics[keepaspectratio,width=70mm]{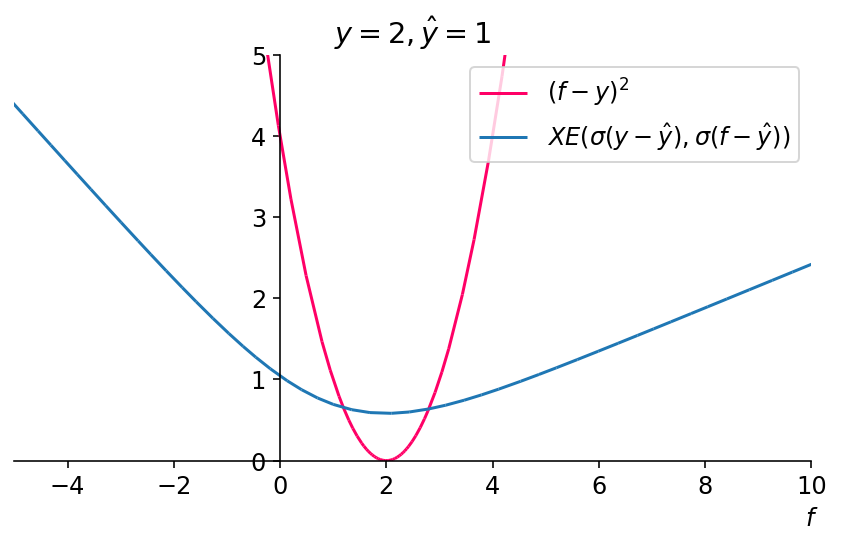}
   \caption{Comparison of MSE and our proposed soft cross-entropy (denoted as $\mathrm{\it XE}$) for a training instance. 
   $\hat y$ denotes the estimation of the conditional expectation on the observational distribution $\E[y|x]$.
   The soft cross-entropy also takes the minimum value when $f(x,a)=y$. 
   The asymmetry of the loss works as follows.
   The actual outcome was larger than the estimated conditional expectation, i.e., $y-\hat y=1 \ge 0$, in this case, which means that the action $a$ of this instance performed ``relatively well'' compared to the estimated average performance in the observational data under $x$.
   Therefore, predicting $a$ as a better action than average ($f(x,a) \ge \hat y$) is regarded as ``successfully classified'' and penalized less than the failed case ($f(x,a) \le \hat y$) for the same regression error.}
  \label{fig:loss_shape}
  \end{figure}

\end{document}